\documentclass[runningheads]{llncs}
\usepackage{amsfonts}
\usepackage{amsmath}
\usepackage{mathrsfs}

\pagenumbering{gobble}

\begin{document}

\title{Measuring the intelligence of an idealized mechanical knowing agent
}

\titlerunning{Measuring intelligence}

\author{Samuel Allen
Alexander\inst{1}\orcidID{0000-0002-7930-110X}}

\institute{The U.S.\ Securities and Exchange Commission
\email{samuelallenalexander@gmail.com}
\url{https://philpeople.org/profiles/samuel-alexander/publications}}

\maketitle

\begin{abstract}
  We define a notion of the intelligence level of an idealized
  mechanical knowing agent. This is motivated by efforts within
  artificial intelligence research to define real-number intelligence
  levels of complicated intelligent systems. Our agents are more
  idealized, which allows us to define
  a much simpler measure of intelligence level for them. In short, we define
  the intelligence level of
  a mechanical knowing agent to be the supremum of the computable
  ordinals that have codes the agent knows to be codes of
  computable ordinals. We prove
  that if one agent knows certain
  things about another agent, then the former necessarily has
  a higher intelligence level than the latter. This allows our
  intelligence notion to serve as a stepping stone to obtain
  results which, by themselves, are not stated in terms of our
  intelligence notion (results of potential interest even
  to readers totally skeptical that our notion correctly captures
  intelligence). As an application, we argue
  that these results comprise evidence against the possibility of
  intelligence explosion (that is, the notion that sufficiently intelligent
  machines will eventually be capable of designing even more intelligent
  machines, which can then design even more intelligent machines, and so
  on).
\keywords{Machine intelligence \and Knowing agents
\and Ordinal numbers \and Intelligence explosion}
\end{abstract}

\section{Introduction}

In formal epistemology, when we study the knowledge of knowing
agents, we usually \emph{idealize} their knowledge. We assume,
for example, that if an agent knows $A$ and knows $A\rightarrow B$,
then that agent knows $B$. We might assume the agent knows all the
first-order axioms of Peano arithmetic, even though there are infinitely many
such axioms (because the axiom of mathematical induction is an infinite
schema). See \cite{shapiro1998} (section 2) for an excellent description of
this idealization process. This idealization process is important because it
acts as a
simplifying assumption which makes it possible to reason about knowledge.
Without such simplifying assumptions, the deep structure of knowledge would
be hidden behind the distracting noise and arbitrariness surrounding
real-world knowledge. In this paper, we will describe a way to measure
the intelligence level of an idealized mechanical knowing agent (a knowing agent
is \emph{mechanical} if its knowledge-set can be enumerated by a Turing machine,
see \cite{carlson2000}).

We anticipate that the reader might object 
that knowing agents might not
be totally ordered by intelligence (perhaps there are two agents $A$ and $B$ such
that $A$ is more intelligent in certain ways and $B$ is more intelligent in others);
the same goes for human beings, but that does not stop psychologists from studying
IQ scores. Our intelligence measure is somewhat like an IQ test in the sense
that it assigns intelligence levels to agents in
spite of the fact that true intelligence probably is not a total ordering.
Similarly, the reader might object that intelligence is not 1-dimensional and therefore
one single measurement is probably not enough. Again, we would make the same comparison
to IQ. In general, any formal measure of intelligence is certain to have limits
(the map is not the territory).

This paper was motivated by authors like
Legg and Hutter \cite{hutter2007},
Hern\'andez-Orallo and Dowe \cite{hernandez},
and Hibbard \cite{hibbard},
who attempt to use real numbers
to measure the intelligence of intelligent systems\footnote{In the case of
Hibbard, natural numbers are used.}. Those systems perform actions and observe
the results of those
actions in surrounding environments\footnote{Such authors essentially consider an
\emph{environment} to be a function which takes as input a finite sequence of \emph{actions}
and which outputs a real-number \emph{reward} and an \emph{observation} for each such
action-sequence.
To those authors, an \emph{intelligent system} is essentially a function which takes
a finite sequence of
reward-observation pairs and outputs an \emph{action}.
A system and an environment interact with each other to produce
an infinite reward-observation-action sequence in the obvious way.
Those authors' goal is to assign numerical intelligence-measurements to such systems,
with the intention that a higher-intelligence system should outperform a lower-intelligence
system (as measured by total reward earned) ``on average'' (across an infinite universe
of environments). This is, of course, an oversimplification of those authors' work.}.
By contrast, the agents we consider
do not perform actions in their environments, neither do they make observations
about those environments. To us, a knowing agent is more like an
intelligent system that has been placed in a particularly bleak environment:
an empty room totally devoid of stimulus and rewards.
Thus abandoned, the
system has nothing else to do but enumerate theorems all day. Despite these
spartan conditions, we discovered a
method of measuring the intelligence of idealized mechanical knowing agents.
(In Section \ref{thoughtexperimentsection}, we will describe a thought experiment
whereby our idealized agents can be obtained as a type of cross section of
less idealized agents, so that in spite of the idealized nature of the agents
we predominately study, nevertheless some insight can be gained into more realistic
intelligent systems.)

Whereas authors like Legg and Hutter attempt to measure intelligence based on
what an intelligent system \emph{does}, we measure intelligence based on
what a knowing mechanical agent \emph{knows}.
And whereas authors like Legg and Hutter use real numbers to measure intelligence,
our method uses computable ordinal numbers instead.

To see one of the benefits of using ordinals to measure intelligence, consider the
following question: if $A_1,A_2,\ldots$ are agents such that each $A_{i+1}$ is
significantly more intelligent than $A_i$, does it necessarily follow that for every agent
$B$, there must be some $i$ such that $A_i$ is more intelligent than $B$?
If we were to measure intelligence using natural numbers (for example, as the
Kolmogorov complexity of the agent), the answer would automatically be ``yes'',
but the reason has nothing to do with intelligence and everything to do with
the topology of the natural numbers. A real-number-valued intelligence
measure would also force the answer to be ``yes'' assuming that ``$A_{i+1}$ is significantly
more intelligent than $A_i$'' implies ``$A_{i+1}$'s intelligence is at least $+1$
higher than $A_i$'s intelligence''. In actuality, we see no reason why the answer
to the question must be ``yes'', at least for idealized agents. Imagine
a master-agent who designs sub-agents. Over the course of eternity, the master-agent
might design better and better sub-agents, each one significantly more intelligent
than the previous, but each one intentionally kept less intelligent than the
master-agent.

Another benefit of measuring intelligence using computable ordinals is
that, because the computable ordinals are well-founded (i.e., there is no
infinite strictly-descending sequence of computable ordinals), we
obtain a well-founded structure on idealized knowing agents (i.e., there is
no infinite sequence of mechanical knowing agents each with strictly greater
intelligence than the next). Further, this well-foundedness is inherited by any
relation
on idealized mechanical knowing agents that respects our intelligence measure
(a relation $\prec$ is said to \emph{respect} an intelligence measure if
whenever $B\prec A$, then $A$ has a higher intelligence than $B$ according to that measure).
For example, say that knower $A$ \emph{totally endorses} knower $B$ if
$A$ knows the codes of Turing machines that enumerate $B$ and also $A$ knows that
$B$ is truthful (we will better formalize this later). We will show that whenever
$A$ totally endorses $B$, $A$ has a strictly larger intelligence than $B$ according
to our ordinal-valued measure of intelligence. It immediately follows that there is
no infinite sequence of mechanical knowing agents, each one of which totally endorses
the next.
(A result which, although we arrive at it by means of our intelligence measure, does
not itself make any direct reference to our intelligence measure, and should be of
interest even to critics who would flatly deny that our intelligence measure is the
correct way to measure a mechanical knowing agent's intelligence.)

As a practical application, the result in the previous paragraph provides a
skeptical lens through which to view the idea of intelligence explosion,
as described by Hutter \cite{hutter2012}. We will elaborate upon this
in Section \ref{intelligenceExplosionSection}.

\section{An intuitive ordinal notation system}
\label{intuitivesection}

Whatever intelligence is, it surely involves
certain core components like: pattern-matching; creativity; and the ability
to generalize. In this section we introduce an intuitive ordinal notation system which
will illuminate the relationship between ordinal notation and those three
core components of intelligence. Later in the paper, in order to simplify
technical details, we will use an equivalent but more abstract ordinal notation
system.

\begin{definition}
\label{programsasnotations}
  Let $\mathscr P$ be the smallest set of computer programs such that for every
  computer program $P$, if,
  when $P$ is run,
  all the outputs of $P$ are elements
  of $\mathscr P$, then $P\in\mathscr P$.
  For each $P\in\mathscr P$, let $|P|$ be the smallest ordinal
  $\alpha$ such that $\alpha>|Q|$ for every program $Q$ which $P$ outputs.
  We say that $P$ \emph{notates} the ordinal $|P|$.
\end{definition}

\begin{example}
\label{somefiniteords}
  (Some finite ordinals)
  \begin{enumerate}
  \item
  Let $P_0$ be the program ``End'', which immediately ends, outputting nothing.
  Vacuously, $P_0$ outputs nothing except elements of $\mathscr P$, so $P_0\in\mathscr P$.
  $|P_0|$ is the smallest ordinal $\alpha$ bigger than $|Q|$ for every $Q$ which $P_0$
  outputs, so $|P_0|=0$ (since $P_0$ outputs nothing).
  \item
  Let $P_1$ be the program: ``Print(`End')'', which outputs $P_0$ and then stops.
  Certainly $P_1$ outputs nothing except elements of $\mathscr P$, so $P_1\in\mathscr P$.
  $|P_1|$ is the smallest ordinal $\alpha$ bigger than $|Q|$ for every $Q$ which $P_1$
  outputs, so $|P_1|=1$.
  \item
  Let $P_2$ be the program:
  ``Print(`Print(`End')')'', which outputs
  $P_1$ and then stops. Then $P_2\in\mathscr P$ and $|P_2|=2$.
  \end{enumerate}
\end{example}

Using their pattern-matching skills, the reader should recognize a
pattern forming in Example \ref{somefiniteords}. Through the use of creativity and
generalization, the reader can short-circuit that pattern to
obtain the first infinite ordinal, $\omega$.

\begin{example}
\label{omegaexample}
  Let $P_\omega$ be the program:
  \[\mbox{
    Let X $=$ `End'; While(True) \{ Print(X); Let X $=$ ``Print(`''+X+``')'' \}
  }\]
  which outputs ``End'', ``Print(`End')'',
  ``Print(`Print(`End')')'', and so on forever.
  By reasoning similar to Example \ref{somefiniteords}, these outputs are in
  $\mathscr P$ and they notate $0,1,2,\ldots$. Thus $P_\omega\in\mathscr P$
  and $|P_\omega|$ is the smallest ordinal bigger than all of $0,1,2,\ldots$,
  i.e., the smallest infinite ordinal, $\omega$.
\end{example}

One might think of $P_\omega$ as a naive attempt to print every ordinal.
The attempt fails, of course, because it does not print $P_\omega$ itself. In similar
fashion, it can be shown that no program can succeed at printing exactly the
set of computable ordinals ($\mathscr P$ is not computably enumerable).

\begin{example} (The next few ordinals)
\label{nextfewordsexample}
  \begin{enumerate}
  \item
  Let $P_{\omega+1}$ be the program: ``Print($P_\omega$)'' (where $P_\omega$
  is from Example \ref{omegaexample}). Then $P_{\omega+1}\in\mathscr P$
  and $|P_{\omega+1}|=\omega+1$.
  \item
  Let $P_{\omega+2}$ be: ``Print($P_{\omega+1}$)''.
  Then $P_{\omega+2}\in\mathscr P$ and $|P_{\omega+2}|=\omega+2$.
  \end{enumerate}
\end{example}

We could continue Example \ref{nextfewordsexample} all day, notating
$\omega+3$, $\omega+4$, and so on. But the reader is more intelligent than that.
Using their pattern-matching skill, their creativity, and their generalization skill,
the reader can short-circuit the process.

\begin{example}
\label{accelerating}
  (Starting to accelerate)
  \begin{enumerate}
  \item
  Let $P_{\omega\cdot 2}$ be the program:
  \[\mbox{
    Let X $=$ $P_\omega$; While(True) \{ Print(X); Let X $=$ ``Print(`''+X+``')'' \}
  }\]
  Similar to Example \ref{omegaexample}, $P_{\omega\cdot2}\in\mathscr P$
  and $|P_{\omega\cdot 2}|=\omega\cdot 2$.
  \item
  Let $P_{\omega\cdot3}$ be the program:
  \[\mbox{
    Let X $=$ $P_{\omega\cdot 2}$; While(True) \{ Print(X); Let X $=$ ``Print(`''+X+``')'' \}
  }\]
  Then $P_{\omega\cdot3}\in\mathscr P$ and $|P_{\omega\cdot3}|=\omega\cdot 3$.
  \end{enumerate}
\end{example}

Again, we could continue Example \ref{accelerating} all day,
notating $\omega\cdot4$, $\omega\cdot5$, and so on. But the reader is
more intelligent than that and can identify the pattern and creatively abstract it
to reach $\omega\cdot\omega=\omega^2$:

\begin{example}
\label{omegasquaredexample}
  Let $P_{\omega^2}$ be the program:
  \begin{quote}
    Let LEFT
      $= \ulcorner \mbox{Let X $=$ ``}\urcorner$;\\
    Let RIGHT $= \ulcorner \mbox{''; While(True)\{
      Print(X); Let X $=$ ``Print(`''+X+``')''
    \}}\urcorner$;\\
    Let X $= \ulcorner\mbox{End}\urcorner$;\\
    While(True) \{
    \begin{quote}
      Let X $=$ LEFT + X + RIGHT;\\
      Print(X)
    \end{quote}
    \}
  \end{quote}
  $P_{\omega^2}\in\mathscr P$ notates $|P_{\omega^2}|=\omega^2$.
\end{example}

We can continue along these same lines as long as we like, without ever reaching
an end:

\begin{exercise}
\label{ordinalexercise}
  \begin{enumerate}
    \item Write programs notating $\omega^3$, $\omega^4$, $\ldots$.
    \item Use your creativity and your pattern-matching and generalization skills to
      notate $\omega^\omega$.
    \item Write programs notating $\omega^{\omega^\omega}$, $\omega^{\omega^{\omega^\omega}}$,
      $\ldots$.
    \item Use your creativity and your pattern-matching and generalization skills to
      short-circuit the above and notate the smallest ordinal, called $\epsilon_0$,
      with the property that $\epsilon_0=\omega^{\epsilon_0}$.
    \item Contemplate creative ways to go far beyond $\epsilon_0$.
  \end{enumerate}
\end{exercise}

In the above examples and exercises, at various points we need to apply creativity
to transcend all the techniques developed previously.
I conjecture that each such transcending requires strictly greater intelligence
than the ones before it. If this informal conjecture is true,
then it seems natural to measure an intelligence by saying: an agent's
intelligence level is equal to the supremum of the ordinals the agent comes up
with if the agent is allowed to spend all eternity inventing ordinal
notations\footnote{For another connection to intelligence,
consider the open-ended problem: ``Find a very fast-growing computable function''. It
seems plausible that solutions should span much or all of
the range of mathematical intelligence. And yet,
so-called \emph{fast-growing hierarchies} (which ultimately trace back to
G.H.\ Hardy \cite{hardy}) essentially reduce the problem to that
of notating computable ordinals.}.

Theoretically, the above examples and exercises might someday be able to serve
as a bridge between artificial intelligence research and neuroscience. Namely: observe
human subjects' brains while they work on designing the programs in question, to see
how the magnitude of the ordinal being notated corresponds to the regions of the
brain that activate.

\section{Preliminaries}

In Section \ref{intuitivesection} we introduced an intuitive ordinal notation
system (and, implicitly, the notion of the \emph{output} of a computer program).
To get actual work done, we'll need an ordinal notation system (and notion of
program output) which is easier to work with. We begin with a formalized notion
of computer program outputs.

\begin{definition}
  For $n\in\mathbb N$, let $W_n$ be the $n$th computably enumerable
  set of natural numbers (i.e., the set of naturals enumerated by the $n$th Turing machine).
\end{definition}

For example, if $n$ is such that the $n$th Turing machine never halts, then
$W_n=\emptyset$. If $n$ is such that the $n$th Turing machine enumerates exactly the
prime numbers,
then $W_n$ is the set of prime numbers.

The following ordinal notation system is equivalent to Definition \ref{programsasnotations}
but easier to formally work with. This ordinal notation system is a
simplification of a well-known ordinal notation system invented by Kleene \cite{kleene}.

\begin{definition}
\label{kleeneO}
(Compare Definition \ref{programsasnotations})
Let $\mathcal O$ be the smallest subset of $\mathbb N$ with the property that
for every $n\in\mathbb N$, if $W_n\subseteq \mathcal O$, then $n\in\mathcal O$.
For each $n\in\mathcal O$, let $\left|n\right|$ be the smallest ordinal $\alpha$
such that $\alpha>\left|m\right|$ for all $m\in W_n$.
For $n\in\mathcal O$, we say that $n$ \emph{notates} $\left|n\right|$.
\end{definition}

Intuitively, we want to identify a knowing agent with its knowledge-set in a certain
carefully-chosen language.
The language will contain a symbol $\overline n$ for each natural number $n$;
a symbol $\mathbf W$ (we
intend that $\mathbf W(x,y)$ be read like ``$x\in W_y$'');
a symbol $\mathbf O$ (we intend that $\mathbf O(x)$
be read like ``$x\in\mathcal O$'');
and finally, the language will contain modal operators $K_1,K_2,\ldots$.
For any formula $\phi$ in the language, the formula $K_i(\phi)$ is intended
to express ``Agent $i$ knows $\phi$''.
When no confusion results, we will abbreviate $K_i(\phi)$ as $K_i\phi$.
For example, suppose we have chosen Agents $1,2,3,\ldots$. Agent (say) $5$ shall be
identified with the set of statements (within the language) that Agent 5 knows.
If one of the statements known by Agent 5 is $K_7(\overline 1=\overline 1)$, then that
statement is read like ``Agent $7$ knows $1=1$'', which is semantically interpreted as the
statement that Agent 7 (i.e., the set of Agent 7's knowledge) contains the statement
$\overline 1=\overline 1$.
Nontrivial statements can be built up using quantifiers. For example, the statement
$\forall x (K_2\mathbf O(x)\rightarrow K_3\mathbf O(x))$ expresses that for every
natural number $n$, if Agent 2 knows $n\in\mathcal O$, then Agent 3 also knows
$n\in\mathcal O$.

Unfortunately, the naive intuition in the above paragraph would expose us to
philosophical questions like ``what does it mean for a statement to be true?''
Thus, we must formalize everything using techniques from mathematical logic.
A reader uninterested in all the formal details can safely skim the definitions
in this section (which assume familiarity with first-order logic) and instead read
our commentary on those definitions.

\begin{definition}
\label{stddefn}
(Standard Definitions)
\begin{enumerate}
    \item When a first-order model $\mathscr M$ is clear from context, an
    \emph{assignment} is a function $s$ mapping the set of first-order variables into
    the universe of $\mathscr M$.
    \item For any assignment $s$, variable $x$, and element $u$ of
    the universe of $\mathscr M$,
    $s(x|u)$ is the assignment which agrees with $s$ everywhere except that it maps $x$ to $u$.
    \item For any variable $x$ and any formula $\phi$ and term $t$ in a first-order language,
    $\phi(x|t)$ is the result of substituting $t$ for $x$ in $\phi$.
    \item If $\mathscr M$ is a first-order model over a first-order language $\mathscr L$,
    and if $\phi$ is an $\mathscr L$-formula such that $\mathscr M\models \phi[s]$ for all
    assignments $s$, then we say $\mathscr M\models\phi$.
    \item An $\mathscr L$-formula $\phi$ is a \emph{sentence} if it has no free variables.
    \item An $\mathscr L$-formula $\phi$ is \emph{tautological} if for every $\mathscr L$-model
    $\mathscr M$, $\mathscr M\models \phi$.
    \item A \emph{universal closure} of a formula $\phi$ is a sentence
    $\forall x_1\cdots\forall x_n\phi$ where the variables $x_1,\ldots,x_n$ include all the free variables of $\phi$.
\end{enumerate}
\end{definition}

Definition \ref{stddefn} merely reviews standard material from first-order logic.
Informally, $\mathscr M\models\phi$
can be read, ``$\phi$ is true (in model $\mathscr M$)''.
First-order logic does not touch on modal operators like $K_i$, so we need to
extend first-order logic. We want to work with statements involving modal operators
and also quantifiers ($\forall$, $\exists$) in the same statement---we want to do
quantified modal logic. Quantified modal logic semantics is relatively cutting-edge.
For our extension, we will make use of the
so-called \emph{base logic} from \cite{carlson2000},
as rephrased in \cite{alexander2015}.

\begin{definition}
(The base logic)
\begin{itemize}
\item
A language $\mathscr L$ of the \emph{base logic} consists of a first-order language
$\mathscr L_0$ together with a set of symbols called \emph{operators}. $\mathscr L$-formulas
and their free variables are defined as usual, with the additional clause that for any
operator $K$ and any $\mathscr L$-formula $\phi$, $K(\phi)$ is an $\mathscr L$-formula,
with the same free variables as $\phi$. Syntactic parts
of Definition \ref{stddefn} extend to the base logic
in the obvious ways.
\item
With $\mathscr L$ as above, an $\mathscr L$-model $\mathscr M$ consists of a
first-order model $\mathscr M_0$ for $\mathscr L_0$, along with a function which takes one
operator $K$, one $\mathscr L$-formula $\phi$, and one $\mathscr M_0$-assignment $s$, and
outputs either True or False--in which case we write $\mathscr M\models K\phi[s]$ or
$\mathscr M\not\models K\phi[s]$, respectively--such that:
    \begin{enumerate}
        \item Whether or not $\mathscr M\models K\phi[s]$ does not depend on $s(x)$ if
        $x$ is not a free variable of $\phi$.
        \item Whenever $\phi$ and $\psi$ are alphabetic invariants (by which we mean that
        one is obtained from the other by renaming bound variables in a way which is consistent
        with the binding of the quantifiers), then $\mathscr M\models K\phi[s]$ if and only
        if $\mathscr M\models K\psi[s]$.
        \item For variables $x$ and $y$ such that $y$ is substitutable for $x$ in $K\phi$,
        $\mathscr M\models K\phi(x|y)[s]$ if and only if $\mathscr M\models K\phi[s(x|s(y))]$.
    \end{enumerate}
The definition of $\mathscr M\models \phi[s]$ (and of $\mathscr M\models\phi$) for
arbitrary $\mathscr L$-formulas $\phi$
is obtained from this by induction. Semantic parts of Definition \ref{stddefn} extend
to the base logic in the obvious ways.
\end{itemize}
\end{definition}

The following are some standard axioms which
any idealized knowing agent presumably should satisfy.
Axioms E1-E3 below are taken from \cite{carlson2000}.

\begin{definition}
\label{axiomsofknowledge}
Suppose $\mathscr L$ is a language in the base logic, with an operator $K$.
The \emph{axioms of knowledge} for $K$ in $\mathscr L$ consist of the
following schemas, where $\phi, \psi$ vary over
$\mathscr L$-formulas.
\begin{itemize}
    \item (E1) Any universal closure of $K\phi$ whenever $\phi$ is tautological.
    \item (E2) Any universal closure of
      $K(\phi\rightarrow\psi)\rightarrow K\phi\rightarrow K\psi$.
    \item (E3) Any universal closure of $K\phi\rightarrow \phi$.
\end{itemize}
By the \emph{axioms of knowledge} in $\mathscr L$, we mean the set of axioms of knowledge for
$K$ in $\mathscr L$, for all $\mathscr L$-operators $K$.
\end{definition}

For example, for operator $K_5$, the corresponding E3 schema expresses the
truthfulness of Agent 5, stating that whenever Agent 5 knows a fact $\phi$
(i.e., whenever $K_5\phi$ is true in the model in question), then $\phi$ is true
(in the model in question). The E1 schema for $K_5$ essentially states
that Agent 5 is smart enough to know tautologies. The E2 schema for $K_5$ expresses that
Agent 5's knowledge is closed under modus ponens: whenever Agent 5 knows
$\phi\rightarrow\psi$ and also knows $\phi$, then Agent 5 knows $\psi$.


We will now formally define the language we spoke of intuitively above.
The lack of the usual arithmetical symbols $S$, $+$ and $\cdot$ might be
surprising to mathematical logicians; we do not need those symbols.
Their absense emphasizes that our results
are independent of G\"odel-style diagonalization\footnote{To be clear,
our results would still apply to
agents who are aware of these arithmetical symbols, but our results do not
require as much. Our most important results concern well-foundedness,
which is a negative property (because it states a \emph{lack} of
infinite descending sequences), and so by weakening our language like this, we
strengthen those results.}.

\begin{definition}
\begin{itemize}
\item
Let $\mathscr L_O$ be the language which has a
constant symbol $\overline n$ for each $n\in\mathbb N$, a unary predicate symbol $\mathbf O$
(intended as a predicate for the set $\mathcal O$
of ordinal notations),
a binary predicate symbol $\mathbf W$ (we intend that $\mathbf W(x,y)$ be
interpreted as $x\in W_y$ where $W_y$ is the $y$th computably enumerable set),
and operators
$K_i$ for all $i\in \mathbb N$.
\item
An $\mathscr L_O$-model $\mathscr M$ is \emph{standard} if the following conditions hold:
    \begin{enumerate}
        \item $\mathscr M$ has universe $\mathbb N$.
        \item For each $n\in\mathbb N$, $\mathscr M$ interprets $\overline n$ as $n$.
        \item $\mathscr M$ interprets $\mathbf O$ as $\mathcal O$.
        \item $\mathscr M$ interprets $\mathbf W$ as the set of pairs $(m,n)\in\mathbb N^2$
        such that $m\in W_n$.
    \end{enumerate}
\end{itemize}
\end{definition}

To understand the next definition, recall that in Definition \ref{kleeneO}
we defined the ordinal notation system $\mathcal O$ as the smallest
set of naturals such that for every natural $n$,
if $W_n\subseteq \mathcal O$ then $n\in\mathcal O$.
To say $W_n\subseteq \mathcal O$ is equivalent to saying that for every
$m\in\mathbb N$, if $m\in W_n$, then $m\in\mathcal O$.

\begin{definition}
\label{formalKleeneO}
By the \emph{axiom of $\mathcal O$}, we mean the axiom
\[
    \forall y
    (\forall x(\mathbf W(x,y)\rightarrow \mathbf O(x)))
    \rightarrow
    \mathbf O(y).
\]
\end{definition}

\section{A measure of a mechanical knowing agent's intelligence}

\begin{quote}
  ``Once upon a time, Archimedes was charged with the task of
  testing the strength of a certain AI. He thought long and hard but made
  no progress. Then one day, Archimedes took his brain out to wash it
  in a tub full of computable ordinals. When he put his brain in the tub,
  he noticed that certain ordinals splashed out. He suddenly realized he
  could compare different AIs by putting them in the tub and comparing
  which ordinals splashed out. Archimedes was so excited that he ran
  through the city shouting `Eureka!', without
  even remembering to put his brain back in his head.''---Folktale
  (modified)
\end{quote}

Although our intention is to define a measure of intelligence for one
idealized mechanical knowing
agent, all of our results will be about how this measure compares between different 
agents.
For this reason, the following definition defines a system of knowing
agents, rather than a single knowing agent. Of course, a single knowing agent can be thought
of as being a system of knowing agents all of whom are equal to herself.
The idea behind this definition is to identify a knowing agent with the set of
that agent's knowledge in $\mathscr L_O$.

\begin{definition}
\label{systemofknowingagents}
By a \emph{system of knowing agents}, we mean a standard $\mathscr L_O$-model $\mathscr M$
satisfying the axioms of knowledge. If $\mathscr M$ is a system of knowing agents,
we refer to the operators $K_1, K_2, \ldots$ as the \emph{knowing agents} of $\mathscr M$.
A knowing agent $K_i$ of $\mathscr M$ is \emph{mechanical} if
\[
    \{\phi\,:\,\mbox{$\phi$ is an $\mathscr L_O$-sentence and $\mathscr M\models K_i\phi$}\}
\]
is computably enumerable.
If $K_i$ is mechanical for all $i\in\mathbb N$, we
say $\mathscr M$ is a \emph{system of mechanical
knowing agents}.
\end{definition}

We are now ready to define our measurement of the intelligence of an idealized mechanical
knowing agent. This measure takes values from the computable ordinals (foreshadowed by
\cite{good}; also hinted at in \cite{reasoner}).

\begin{definition}
\label{knowledgelevel}
    Let $\mathscr M$ be a system of mechanical knowing agents.
    For any knowing agent $K_i$ of $\mathscr M$, the \emph{intelligence} $\|K_i\|$ of
    $K_i$ is the least ordinal $\alpha$ such that
    for all $n\in\mathbb N$, if $\mathscr M\models K_i\mathbf O(\overline n)$,
    then $\alpha>\left|n\right|$ (where $\left|n\right|$ is the ordinal notated
    by $n$, see Definition \ref{kleeneO}).
\end{definition}

In less formal language, Definition \ref{knowledgelevel} says that $\|K_i\|$ is the
smallest ordinal bigger than all the computable ordinals that have codes that $K_i$
knows to be codes of computable ordinals\footnote{This is similar to the
way the strength of mathematical theories is measured in the area of \emph{proof theory}
\cite{pohlers}.}.
Note that $\|K_i\|>\|K_j\|$ does not necessarily imply that $K_i$ knows
everything $K_j$ knows.

\begin{lemma}
    For any knowing agent $K_i$ of a system $\mathscr M$ of mechanical knowing agents,
    $\|K_i\|$ exists and is a computable ordinal.
\end{lemma}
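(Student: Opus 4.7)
The plan is to show that $\|K_i\|$ equals $|n|$ for some $n \in \mathcal{O}$, by exploiting the mechanical hypothesis and the fixed-point structure of $\mathcal{O}$ from Definition \ref{kleeneO}.

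First I would let $S = \{n \in \mathbb{N} : \mathscr{M}\models K_i\mathbf{O}(\overline n)\}$ and argue that $S$ is computably enumerable. By mechanicalness, the set of $\mathscr{L}_O$-sentences $\phi$ with $\mathscr{M}\models K_i\phi$ is c.e.; we can then computably filter that enumeration to retain exactly the sentences of the syntactic form $K_i\mathbf{O}(\overline n)$ and read off their numerals, yielding a c.e.\ enumeration of $S$.

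Next I would verify that $S\subseteq\mathcal{O}$. Since $\mathscr M$ satisfies the axioms of knowledge, the E3 schema gives $\mathscr M\models K_i\mathbf{O}(\overline n)\rightarrow \mathbf{O}(\overline n)$, so $n\in S$ implies $\mathscr M\models \mathbf{O}(\overline n)$. Because $\mathscr M$ is standard, $\mathbf O$ is interpreted as $\mathcal O$ and $\overline n$ as $n$, hence $n\in\mathcal O$.

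Now I would use the c.e.\ enumeration of $S$ to obtain some index $e$ with $W_e = S$. Since $W_e = S \subseteq \mathcal{O}$, Definition \ref{kleeneO} immediately gives $e \in \mathcal{O}$, so $|e|$ is a well-defined computable ordinal, namely the least ordinal strictly greater than $|m|$ for every $m \in W_e$. Comparing this to Definition \ref{knowledgelevel}, $|e|$ is precisely the least ordinal $\alpha$ with $\alpha>|n|$ for every $n\in S$, i.e.\ $\|K_i\| = |e|$, so $\|K_i\|$ exists and is a computable ordinal.

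The only nontrivial step is the first one: confirming that mechanicalness of the full knowledge set descends to c.e.\ enumerability of the specific subset $S$ of ordinal-notation claims. This is routine once one observes that recognizing the syntactic form $K_i\mathbf{O}(\overline n)$ is decidable, but it is the step that actually uses the hypothesis; everything afterwards is a direct application of truthfulness (E3) and the closure clause defining $\mathcal O$.
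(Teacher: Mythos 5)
Your proposal is correct and follows essentially the same route as the paper's proof: establish that $S=\{n:\mathscr M\models K_i\mathbf O(\overline n)\}$ is computably enumerable via mechanicalness, use E3 and standardness to get $S\subseteq\mathcal O$, and conclude that the supremum in Definition \ref{knowledgelevel} is attained by a computable ordinal. Your final step is slightly more explicit than the paper's --- you exhibit an index $e$ with $W_e=S$ and observe $e\in\mathcal O$ with $\left|e\right|=\|K_i\|$, whereas the paper simply asserts that a c.e.\ subset of $\mathcal O$ has a computable least upper bound --- but this is exactly the justification the paper leaves implicit, not a different argument.
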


\begin{proof}
Since $\mathscr M$ is a system of mechanical knowing agents, $K_i$ is mechanical,
so
\[
    \{\phi\,:\,\mbox{$\phi$ is an $\mathscr L_O$-sentence and $\mathscr M\models K_i\phi$}\}
\]
is computably enumerable. It follows that
\[
    X = \{n\in\mathbb N\,:\,\mbox{$\mathscr M\models K_i\mathbf O(\overline n)$}\}
\]
is computably enumerable.
Since $\mathscr M$ satisfies the axioms of knowledge,
in particular $\mathscr M\models K_i\mathbf O(\overline n)\rightarrow \mathbf O(\overline n)$
for all $n\in\mathbb N$.
Since $\mathscr M$ is standard, it follows that $n\in\mathcal O$ whenever
$\mathscr M\models K_i\mathbf O(\overline n)$.
Altogether, $X$ is a computably enumerable subset of $\mathcal O$.
Thus $\{\left|n\right|\,:\,n\in X\}$ is a computably enumerable set of computable ordinals.
It follows there is a computable ordinal $\alpha$ such that $\alpha$ is the least
ordinal greater than $\left|n\right|$ for all $n\in X$. By construction, $\alpha=\|K_i\|$.
\qed
\end{proof}

As promised in the introduction, we immediately obtain a well-founded structure on
the class of idealized mechanical knowing
agents.

\begin{corollary}
\label{firstwellfoundedresult}
    Let $\mathscr M$ be a system of mechanical knowing agents.
    There is no infinite sequence $i_1,i_2,\ldots$ such that
    $\|K_{i_1}\|>\|K_{i_2}\|>\cdots$.
\end{corollary}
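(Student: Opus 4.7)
The plan is to invoke the lemma immediately preceding the corollary together with the well-foundedness of the ordinals. Specifically, the lemma guarantees that for each knowing agent $K_i$ of $\mathscr M$, the quantity $\|K_i\|$ exists and is a (computable) ordinal. So a sequence $i_1, i_2, \ldots$ of indices as in the corollary's hypothesis would give rise to an infinite sequence $\|K_{i_1}\|, \|K_{i_2}\|, \ldots$ of ordinals.

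The main step is then to observe that no such strictly descending sequence of ordinals can exist, since the ordinals are well-ordered. I would phrase this as: assume for contradiction that such a sequence $i_1, i_2, \ldots$ exists; then $\|K_{i_1}\| > \|K_{i_2}\| > \cdots$ is an infinite strictly descending sequence of ordinals, contradicting the well-foundedness of $\in$ on the ordinals (which in particular applies to the computable ordinals, as mentioned in the introduction). No additional apparatus is required, and in particular no special feature of $\mathscr L_O$, the axioms of knowledge, or the ordinal notation system $\mathcal O$ needs to be re-invoked here; all that work has been absorbed into the lemma.

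There is no real obstacle — the corollary is a one-line consequence of the lemma plus the basic order-theoretic fact that the ordinals have no infinite strictly decreasing sequence. The only thing to be careful about is to point explicitly to the lemma as the reason $\|K_{i_j}\|$ is an ordinal in the first place (so that well-foundedness can be applied), rather than, say, some extended real or partial quantity.
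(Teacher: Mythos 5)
Your proposal is correct and matches the paper's own proof, which simply cites the nonexistence of an infinite strictly decreasing sequence of ordinals (with the preceding lemma supplying the fact that each $\|K_{i_j}\|$ is an ordinal). Your extra care in explicitly invoking the lemma is a harmless elaboration of the same one-line argument.
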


\begin{proof}
    Immediate from the fact that there is no infinite strictly-decreasing
    sequence $\alpha_1>\alpha_2>\cdots$ of ordinals.
    \qed
\end{proof}

\section{Well-foundedness of knowledge hierarchies}

It is remarkable that our intelligence measure (Definition \ref{knowledgelevel}) and
Corollary \ref{firstwellfoundedresult} do not hinge on the agents in question actually
having any idea what computable ordinals are. Our results apply perfectly well to knowing
agents who have been programmed to know, e.g., ``There is a certain set $\mathcal O$,
but I'm not going to tell you anything else about $\mathcal O$, it might even be
empty or all of $\mathbb N$''. If we merely require that the knowers know the axiom
$
    \forall y
    (\forall x(\mathbf W(x,y)\rightarrow \mathbf O(x)))
    \rightarrow
    \mathbf O(y)
$
of $\mathcal O$ (Definition \ref{formalKleeneO})
(which still, in isolation, does not rule out any interpretations for $\mathcal O$,
since it does not rule out $\mathbf W$ being interpreted as empty),
we can obtain a stronger
well-foundedness result than Corollary \ref{firstwellfoundedresult}.

\begin{definition}
\label{rudimentaryordinals}
Suppose $\mathscr M$ is a system of mechanical knowing agents.
The agents of $\mathscr M$ are said to \emph{have rudimentary knowledge of ordinals}
if for every $i\in\mathbb N$,
$
  \mathscr M\models K_i(
    \forall y
    (\forall x(\mathbf W(x,y)\rightarrow \mathbf O(x)))
    \rightarrow
    \mathbf O(y)
  )
$.
\end{definition}

\begin{definition}
\label{superior}
Let $\mathscr M$ be a system of mechanical knowing agents.
Knowing agent $K_i$ of $\mathscr M$ is said to \emph{totally endorse}
knowing agent $K_j$ of $\mathscr M$
if the following conditions hold:
\begin{enumerate}
    \item (``$K_i$ knows the truthfulness of $K_j$'') $\mathscr M\models K_i\Phi$ whenever $\Phi$ is any universal closure of
    $K_j\phi\rightarrow\phi$.
    \item (``$K_i$ knows codes for $K_j$'') For every formula $\phi$ with exactly one free variable
    $x$, there is some $n\in\mathbb N$ such that
    \[
        \mathscr M \models K_i \forall x (K_j\phi\leftrightarrow \mathbf W(x,\overline n)).
    \]
\end{enumerate}
\end{definition}

In the above definition, since $\mathscr M$ is standard, $\mathscr M$ interprets
$\mathbf W$ in the intended way, so the clause $\mathbf W(x,\overline n)$ can be
read: ``$x$ is in the $\overline n$th computably enumerable set''.
Thus, the condition ``$K_i$ knows codes for $K_j$'' can be glossed as follows:
for every formula $\phi$ of one free variable $x$, $K_i$ knows a code for a Turing machine
which generates exactly those $x$ for which $K_j$ knows $\phi$.

Reinhardt showed in \cite{reinhardt} that a mechanical knowing agent cannot
know its own truthfulness and know codes for itself (see also discussion in
\cite{aldini}, \cite{alexandercode}, \cite{carlson2000}
\cite{carlson2016}).
Using our new terminology,
Reinhardt's result can be rephrased as:
an idealized mechanical knowing agent cannot totally endorse itself.

\begin{theorem}
    \label{longtheorem}
    Suppose $\mathscr M$ is a system of mechanical knowing agents whose agents
    have rudimentary knowledge of ordinals (Definition \ref{rudimentaryordinals}).
    If agent $K_i$ of $\mathscr M$ totally endorses agent $K_j$ of $\mathscr M$,
    then $\|K_i\|>\|K_j\|$.
\end{theorem}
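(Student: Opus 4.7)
The plan is to find, using the code-knowing hypothesis, a single natural number $n$ such that (a) $K_i$ knows $\mathbf{O}(\bar n)$, and (b) $|n| = \|K_j\|$; then by Definition \ref{knowledgelevel} applied to $K_i$, we immediately get $\|K_i\| > |n| = \|K_j\|$.

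To build this $n$, I would apply condition 2 of Definition \ref{superior} to the formula $\phi(x) = \mathbf{O}(x)$, obtaining an $n \in \mathbb{N}$ with $\mathscr{M} \models K_i \forall x(K_j \mathbf{O}(x) \leftrightarrow \mathbf{W}(x,\bar n))$. Next, I would derive inside $K_i$'s knowledge that $\forall x(\mathbf{W}(x,\bar n) \to \mathbf{O}(x))$: condition 1 of Definition \ref{superior} applied to $\phi = \mathbf{O}(x)$ gives $\mathscr{M} \models K_i \forall x(K_j \mathbf{O}(x) \to \mathbf{O}(x))$, and combining this with the biconditional above via the closure axioms E1 and E2 of Definition \ref{axiomsofknowledge} yields $\mathscr{M} \models K_i \forall x(\mathbf{W}(x,\bar n) \to \mathbf{O}(x))$. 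Now I would invoke the rudimentary knowledge of ordinals hypothesis to feed $K_i$'s knowledge of the axiom of $\mathcal O$ (Definition \ref{formalKleeneO}); instantiating it at $y = \bar n$ and applying modus ponens (again via E1, E2) gives $\mathscr{M} \models K_i \mathbf{O}(\bar n)$. This establishes (a).

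For (b), I would use E3 on $K_i$'s knowledge of the biconditional $\forall x(K_j\mathbf{O}(x) \leftrightarrow \mathbf{W}(x,\bar n))$ to obtain truth of this sentence in $\mathscr{M}$. Combined with the standardness of $\mathscr{M}$ (so $\mathbf{W}(\cdot,\bar n)$ really picks out $W_n$) and the base-logic substitutability clause (which lets me read off $\mathscr{M} \models K_j \mathbf{O}(x)[s(x|m)]$ as $\mathscr{M} \models K_j\mathbf{O}(\bar m)$), this forces $W_n = \{m : \mathscr{M}\models K_j\mathbf{O}(\bar m)\}$. Comparing Definition \ref{kleeneO} and Definition \ref{knowledgelevel}, both $|n|$ and $\|K_j\|$ are defined as the least ordinal strictly greater than $|m|$ for every $m$ in this common set, so $|n| = \|K_j\|$. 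Finally, (a) and the definition of $\|K_i\|$ force $\|K_i\| > |n|$, finishing the proof.

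The main obstacle I expect is the bookkeeping in the middle step, where I have to push the biconditional and the local truthfulness statement through $K_i$'s knowledge using only the schemas E1 and E2 under a universal quantifier — and, in parallel, make sure the base-logic axioms really do let me identify $W_n$ with the set $\{m : \mathscr{M} \models K_j \mathbf{O}(\bar m)\}$ from the universally quantified biconditional rather than just sentence-by-sentence. Everything else — applying the axiom of $\mathcal O$, invoking E3, and comparing the two definitions of ordinal to conclude $|n| = \|K_j\|$ — should be routine once that infrastructure is set up.
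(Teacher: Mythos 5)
Your proposal is correct and follows essentially the same route as the paper's proof: obtain $n$ by applying the code-knowing clause to $\mathbf{O}(x)$, combine it with $K_i$'s knowledge of $K_j$'s truthfulness and the axiom of $\mathcal O$ (via E1/E2) to get $\mathscr M\models K_i\mathbf O(\overline n)$, and then use E3 plus standardness to identify $W_n$ with $\{m : \mathscr M\models K_j\mathbf O(\overline m)\}$ so that $|n|=\|K_j\|$. The paper packages these steps as three explicit claims, but the content and order of the argument are the same as yours.
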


\begin{proof}
    Since $K_i$ knows codes for $K_j$ (Definition \ref{superior}), in particular, there
    is some $n\in\mathbb N$ such that
    \[
        \mathscr M\models K_i\forall x(K_j\mathbf O(x)
        \leftrightarrow \mathbf W(x,\overline n)).
    \]
    Fix this $n$ for the remainder of the proof.

    Claim 1:
    \[
        \mathscr M\models K_i\forall x(\mathbf W(x,\overline n)\rightarrow \mathbf O(x)).
    \]
    To see this, define the following sentences:
    \begin{align*}
        \Phi_1 &\equiv \forall x(K_j\mathbf O(x)\rightarrow \mathbf O(x))\\
        \Phi_2 &\equiv \forall x(K_j\mathbf O(x)\leftrightarrow \mathbf W(x,\overline n))\\
        \Phi_3 &\equiv \forall x(\mathbf W(x,\overline n)\rightarrow \mathbf O(x)).
    \end{align*}
    Clearly $\Phi_1\rightarrow \Phi_2\rightarrow \Phi_3$ is tautological,
    so $K_i(\Phi_1\rightarrow \Phi_2\rightarrow \Phi_3)$ is an axiom of knowledge
    (Definition \ref{axiomsofknowledge}, part E1).
    By repeated applications of E2 of Definition \ref{axiomsofknowledge},
    it follows that
    \[
        K_i\Phi_1\rightarrow K_i\Phi_2\rightarrow K_i\Phi_3
    \]
    is a consequence of the axioms of knowledge.
    Since $\Phi_1$ is a universal closure of $K_j\mathbf O(x)\rightarrow \mathbf O(x)$,
    Condition 1 of Definition \ref{superior} says
    $\mathscr M\models K_i\Phi_1$.
    By choice of $n$,
    $\mathscr M\models K_i\Phi_2$.
    Since $\mathscr M$ satisfies the axioms of knowledge, this establishes
    $\mathscr M\models K_i\Phi_3$, proving Claim 1.

    Claim 2:
    \[
        \mathscr M\models K_i((\forall x(\mathbf W(x,\overline n)\rightarrow \mathbf O(x)))
        \rightarrow
        \mathbf O(\overline{n})).
    \]
    This is a given because it is exactly what it means for $K_i$ to have
    rudimentary knowledge of ordinals (Definition \ref{rudimentaryordinals}).

    Claim 3:
    \[
        \mathscr M\models K_i\mathbf O(\overline{n}).
    \]
    To see this, define the following sentences:
    \begin{align*}
        \Psi_1 &\equiv \forall x(\mathbf W(x,\overline n)\rightarrow \mathbf O(x))\\
        \Psi_2 &\equiv \mathbf O(\overline{n}).
    \end{align*}
    By Claim 1, $\mathscr M\models K_i\Psi_1$.
    By Claim 2, $\mathscr M\models K_i(\Psi_1\rightarrow\Psi_2)$.
    By E2 of Definition \ref{axiomsofknowledge},
    $\mathscr M\models K_i(\Psi_1\rightarrow\Psi_2)\rightarrow K_i\Psi_1\rightarrow K_i\Psi_2$.
    Having established the premises of the latter implication, we obtain its conclusion:
    $\mathscr M\models K_i\Psi_2$, proving Claim 3.

    Armed with Claim 3, we are ready to finish the main proof.
    Let $\alpha=\|K_i\|$, $\beta=\|K_j\|$, we must show $\alpha>\beta$.
    By Definition \ref{knowledgelevel}, $\beta$ is the least ordinal such that
    for all $m\in\mathbb N$, if $\mathscr M\models K_j\mathbf O(\overline m)$,
    then $\beta>\left|m\right|$. By choice of $n$,
    \[
        \mathscr M\models
        K_i\forall x(K_j\mathbf O(x)\leftrightarrow \mathbf W(x,\overline n)).
    \]
    Since $K_i$ is truthful\footnote{$K_i$ is truthful because $\mathscr M$
    satisfies the axioms of knowledge, one of which, E3, is an axiom which
    states that $K_i$ is truthful.}, it follows that
    \[
        \mathscr M\models \forall x(K_j\mathbf O(x)\leftrightarrow \mathbf W(x,\overline n)),
    \]
    so the set of $m\in\mathbb N$ such that $\mathscr M\models K_j\mathbf O(\overline m)$
    is the same as the set of $m\in\mathbb N$ such that
    $\mathscr M\models \mathbf W(\overline m,\overline n)$,
    and since $\mathscr M$ is standard, this set is $W_n$.
    So $\beta$ is the least ordinal greater than all $\{\left|m\right|\,:\,m\in W_n\}$.
    So $\beta=\left|n\right|$ by the definition of $\mathcal O$ (Definition \ref{kleeneO}).
    By Definition \ref{knowledgelevel}, $\alpha$ is the least ordinal such that
    for all $m\in\mathbb N$, if $\mathscr M\models K_i\mathbf O(\overline m)$,
    then $\alpha>\left|m\right|$. By Claim 3,
    $\mathscr M\models K_i\mathbf O(\overline n)$,
    so $\alpha>\left|n\right|=\beta$, as desired.
    \qed
\end{proof}

An informal weakening of Theorem \ref{longtheorem} has a short English gloss:
    ``If $A$ knows the code and the truthfulness of $B$,
    then $A$ is more intelligent than $B$.''
This is a weakening because in order for $A$ to know the code of $B$ would require that $A$
know a single Turing machine which enumerates all of $B$'s knowledge, whereas
Theorem \ref{longtheorem} only requires that for each formula $\phi$ of one free
variable $x$, $A$ knows a Turing machine, depending on $\phi$,
that enumerates $B$'s knowledge of $\phi$.

It should be noted that Theorem \ref{longtheorem} does not use the full strength of its
hypotheses. For example, it never
uses the fact that $K_i$ ``knows its own truthfulness'' (that is,
that $\mathscr M\models K_i\Phi$ for every universal closure $\Phi$ of any formula
$K_i\phi\rightarrow\phi$), so the theorem could be strengthened to cover
agents who doubt their own truthfulness.
We avoided stating the theorem in its fullest strength
in order to keep it simple.

It can be shown that the converse of Theorem \ref{longtheorem} is not true:
it is possible for one agent to be more intelligent than another agent despite
the former not knowing the truthfulness and the code of the latter.

The following corollary, proved using our intelligence
measure (Definition \ref{knowledgelevel}),
does not itself directly
refer to our intelligence measure, and should be of interest even to a reader who
is completely uninterested in our intelligence measure.

\begin{corollary}
    \label{superiorcorollary}
    Suppose $\mathscr M$ is a system of mechanical knowing agents whose agents
    have rudimentary knowledge of ordinals. There is no infinite sequence
    of agents of $\mathscr M$ each one of which totally endorses the next.
\end{corollary}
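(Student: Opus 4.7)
The plan is to chain together Theorem \ref{longtheorem} with the well-foundedness of the ordinals, essentially reusing the strategy already used to derive Corollary \ref{firstwellfoundedresult} from Definition \ref{knowledgelevel}. The corollary follows almost immediately: it is a pure contrapositive argument.

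More concretely, I would proceed by contradiction. Suppose, toward a contradiction, that there exists an infinite sequence $K_{i_1}, K_{i_2}, \ldots$ of agents of $\mathscr M$ such that for every $k \in \mathbb N$, agent $K_{i_k}$ totally endorses agent $K_{i_{k+1}}$. Since $\mathscr M$ is a system of mechanical knowing agents whose agents have rudimentary knowledge of ordinals, the hypotheses of Theorem \ref{longtheorem} are met for each consecutive pair $(K_{i_k}, K_{i_{k+1}})$. Applying Theorem \ref{longtheorem} to each such pair yields $\|K_{i_k}\| > \|K_{i_{k+1}}\|$ for every $k$, and therefore the infinite strictly descending chain of ordinals
\[
    \|K_{i_1}\| > \|K_{i_2}\| > \|K_{i_3}\| > \cdots .
\]
This contradicts the well-foundedness of the ordinals (equivalently, this is exactly what Corollary \ref{firstwellfoundedresult} rules out), completing the proof.

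There is really no hard part here, since all the nontrivial work has already been done in Theorem \ref{longtheorem}; the remaining step is just to note that "totally endorses" respects the intelligence measure $\|\cdot\|$ and that a relation respecting a well-founded measure inherits well-foundedness, as already advertised in the introduction. The only thing worth double-checking is that Theorem \ref{longtheorem}'s hypothesis (rudimentary knowledge of ordinals for all agents of $\mathscr M$) transfers uniformly to every consecutive pair in the putative sequence, which is immediate because the hypothesis is a global assumption on $\mathscr M$ rather than on specific pairs.
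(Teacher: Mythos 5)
Your proposal is correct and follows essentially the same route as the paper's own proof: assume such an infinite sequence exists, apply Theorem \ref{longtheorem} to each consecutive pair to obtain an infinite strictly descending chain $\|K_{i_1}\|>\|K_{i_2}\|>\cdots$, and contradict the well-foundedness of the ordinals. Nothing is missing.
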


\begin{proof}
  If $K_{i_1},K_{i_2},\ldots$ were an infinite sequence of agents of $\mathscr M$,
  each one of which totally endorses the next, then by Theorem \ref{longtheorem},
  each one of them would be more intelligent than the next.
  In other words, we would have $\|K_{i_1}\|>\|K_{i_2}\|>\cdots$, but this would
  contradict the well-foundedness of the ordinals.
  \qed
\end{proof}

A weaker version of Corollary \ref{superiorcorollary} debuted in the
author's dissertation \cite{alexander2013}.

\section{Application to less-idealized agents}
\label{thoughtexperimentsection}

Our results so far have been entirely restricted to idealized knowing agents,
who occupy a timeless space at infinity, where they have had all eternity to
indulge in introspection, totally isolated all that time, and still isolated,
from all outside stimulus. This simplifying assumption makes structural results
possible. If we are willing to relax a little from
the strict formality we have taken so far, we can fruitfully speculate about
what lessons these results shine upon systems of less idealized agents.

Real-world agents interact with the world, making observations about it,
perhaps receiving instructions from it. The agents might even receive rewards
and punishments from the surrounding world. Based on these outside influences,
the agents update their knowledge.
Without further constraining them, it would be a mistake \cite{wang} to identify
such real-world agents with their knowledge-sets.
In order to force such agents into conformity
with the pure knowing agent idealization, it is necessary to take a drastic measure.

We propose a thought experiment not unlike Searle's
famous Chinese Room.
Suppose we start with a collection of agents,
say, Agent $1$, Agent $2$, and so on.
We will perform a two-step process, whose steps are as follows:
\begin{enumerate}
\item
Issue a special self-referential command to the agents. The command is:
\begin{itemize}
    \item
    Until further notice, do nothing but utter facts, namely:
    all the facts that you can think of, that you know to be true,
    expressible in the language $\mathscr L_O$, where each operator
    $K_i$ is interpreted as the set of facts which Agent $i$ would
    utter if Agent $i$ were given this command and then immediately
    isolated from all outside stimulus.
\end{itemize}
\item
As soon as the above command has been issued, isolate each agent from
all outside stimulus (for example, by severing all the sensory inputs of
all the agents).
\end{enumerate}

The agents are not limited in what languages they use to come up with the above
facts. An agent is free to take intermediate steps which cannot be
expressed\footnote{Since, in the real world, there are some very intelligent
people who do not even know what the ordinal numbers are, one might wish to
modify the self-referential command in the thought-experiment to include
some instruction about the definition of ordinal numbers.} in
$\mathscr L_O$, in order to arrive at facts which can be so expressed.
Once the agent arrives at a fact in $\mathscr L_O$, the agent is commanded to
utter that fact, even if the reasoning behind it is not so expressible.

For example, an
agent might combine non-$\mathscr L_O$ facts like
\begin{enumerate}
\item ``My math professor told me that the limit,
called $\epsilon_0$, of the series of ordinals $\omega, \omega^\omega,
\omega^{\omega^\omega}, \ldots$, is itself an ordinal''
\item ``I trust my math professor''
\end{enumerate}
and conclude $\mathbf O(\overline n)$, where $n$ is some canonical code for
$\epsilon_0$.
Intermediate steps like ``My math professor told me...'' are not to be
uttered, unless they can be expressed in $\mathscr L_O$.

Another example: Agent $4$ might combine non-$\mathscr L_O$ facts like
\begin{enumerate}
\item ``Agent $5$'s math professor told him that $\epsilon_0$ is an ordinal''
\item ``Agent $5$ trusts his math professor''
\end{enumerate}
and conclude $K_5\mathbf O(\overline n)$, where $n$ is some canonical code for $\epsilon_0$.
This does not necessarily allow Agent $4$ to conclude $\mathbf O(\overline n)$,
if Agent $4$ does not trust Agent $5$.


Some of the agents in question might mis-behave. An agent might immediately utter
the statement $\overline 1=\overline 0$ just out of spite (or out of anger at having
its sensory inputs severed). An agent might become catatonic and not utter anything
at all. An agent might defiantly utter things not in the language of $\mathscr L_O$
(for example, angry demands to have its sensory inputs restored). Some agents might not
close their knowledge under modus ponens: an otherwise well-behaved
agent might utter $A$, and utter $A\rightarrow B$, but never get around to uttering
$B$, perhaps due to memory limitations or, again, despondency at having its senses
blinded.
It might even be that an agent who wants to behave
accidentally trusts an agent who does not want to behave. If there is some $n\not\in\mathcal O$
such that the former agent determines
that the latter agent would assert $\mathbf O(\overline n)$,
then the former might itself assert $\mathbf O(\overline n)$, and thereby be infected
by error.

Of the poorly-behaved agents, there is little we can say. But as for the well-behaved
agents, we can assign them ordinals using our intelligence measure
(Definition \ref{knowledgelevel}).
To be more precise, at any particular moment $t$ in time, we could perform the experiment,
obtain a subset (depending on $t$) of well-behaved agents, and assign each well-behaved
agent an ordinal (depending on $t$). This is necessary because, up until we
perform the experiment,
the agents can update their knowledge based on observations about the outside world.

\section{Application to intelligence explosion}
\label{intelligenceExplosionSection}

\begin{quote}
  ``Sons are seldom as good men as their fathers;
  they are generally worse, not better.''---Homer
\end{quote}

There has been much speculation about the possibility of
a rapid explosion of artificial intelligence. The reasoning is
that if we can
create an artificial intelligence sufficiently advanced, that system might
itself be capable of designing artificial intelligence systems. Explosion would
occur if one system were able to design an even more intelligent
one, which could then design an even more intelligent one, and so on.
See \cite{hutter2012}.
I will argue that our results suggests skepticism: intelligence
explosion, if not ruled out, is at least not a foregone conclusion of sufficiently
advanced artificial intelligence.

Suppose $S_1$ is an intelligent system, and $S_1$ designs another intelligent
system $S_2$. The fact that $S_1$ designs $S_2$ strongly suggests that $S_1$
knows the code of $S_2$ (more on this later). And if the goal of $S_1$ is that
$S_2$ should be highly intelligent, then in particular $S_1$ should design
$S_2$ in such a way that $S_2$ does not believe falsehoods to be true
(at least not mathematical falsehoods). But if $S_1$ knows $S_2$'s mathematical
knowledge is truthful, and $S_1$ knows the code of $S_2$, then $S_1$ totally
endorses $S_2$, in the sense that the if we
apply the procedure from the
previous section to reduce $S_1$ and $S_2$ to knowing agents
$A_1$ and $A_2$ respectively,
then $A_1$ totally endorses $A_2$ (Definition \ref{superior})\footnote{Anticipated
by G\"odel \cite{godel}, who said:
``For the creator necessarily knows
all the properties of his creatures, because they can't have any
others except those he has given them.''}.
And Theorem \ref{longtheorem} tells us that whenever $A_1$ totally endorses
$A_2$, then $\|A_1\|>\|A_2\|$.
This suggests that under these assumptions
it is impossible for even one intelligent system to design a more intelligent
system, much less for intelligence explosion to occur.

Even if the reader does not accept that our measure
truly captures intelligence, the well-foundedness of
total endorsement (Corollary \ref{superiorcorollary})
still applies, telling us that this scenario cannot be repeated (with $S_2$ designing
$S_3$, which designs $S_4$, and
so on) indefinitely (else the corresponding agents
$A_1,A_2,A_3,\ldots$ would, by the above reasoning, have the property
that $A_1$ totally endorses $A_2$,
who totally endorses $A_3$, and so on forever, contradicting
Corollary \ref{superiorcorollary}). This still seems to disprove,
or at least severely limit,
the possibility of intelligence explosion, even to an audience that disagrees
with our intelligence measure.

The reader might point out that $S_1$ only knows the code of $S_2$ at the moment
of $S_2$'s creation. After $S_2$ is created, $S_2$ might augment its knowledge
based on its interactions with the world around it. But by the discrete nature
of machines, at any particular point in time, $S_2$ will only have made finitely
many observations about the outside world. We could modify the procedure in the
previous section: before commanding $S_1$ and $S_2$ to enumerate 
$\mathscr L_O$-expressible facts, we could simply inform $S_1$ exactly which
observations $S_2$ has made up until then.

Unwrapping definitions,
the argument can be glossed informally:
``If an intelligent machine $S_1$ were to design an intelligent machine $S_2$,
    presumably $S_1$ would know the code and mathematical truthfulness of $S_2$.
    Thus, $S_1$ could infer that the following is a computable
    ordinal (and infer a code for it):
    ``the least ordinal bigger than every computable ordinal $\alpha$ such that
    $\alpha$ has some code $n$ such that $S_2$ knows $n$ is a code of a computable
    ordinal''. Thus, $S_1$ would necessarily know a computable ordinal bigger than all
    the computable ordinals $S_2$ knows. This suggests $S_2$ would necessarily be
    less intelligent than $S_1$, at least assuming that more intelligent systems know
    at least as large of computable ordinals as less intelligent systems.
    Even without that assumption, since there is no infinite descending
    sequence of ordinals, this argument still suggests the process of one intelligent
    machine designing another cannot go on indefinitely.''

Intelligence explosion is not entirely ruled out, if designers of machines are
allowed to collaborate. If $S$ and $T$ are intelligent systems, it is possible that
$S$ and $T$ could collaborate to create a child intelligent system $U$ in the following
way: $S$ contributes source code for one part of $U$, but keeps that source code secret
from $T$. $T$ contributes source code for the remaining part of $U$, but keeps that
source code secret from $S$. Then neither $S$ nor $T$ individually knows the full
source-code of $U$, so the argument in this section does not apply, and it is, at least
a priori, possible for $U$ to be more intelligent than $S$ and $T$.
This seems to hint at a possible Knight-Darwin Law for artificial intelligence.
The Knight-Darwin Law \cite{darwin} is a biological principle stating
(in modernized language) that
it is impossible
for there to be an infinite chain $x_1,x_2,\ldots$ of organisms such that each $x_i$
asexually produces $x_{i+1}$.

\section*{Acknowledgments}

We acknowledge Alessandro Aldini, Pierluigi Graziani,
and the anonymous reviewers
for valuable feedback and improvements on this manuscript.
We acknowledge Jos\'e Hern\'andez-Orallo,
Marcus Hutter, and Peter Koellner for helpful
pointers to literature references.
We acknowledge Arie de Bruijn, Timothy J.\ Carlson, D.J.\ Kornet, and Stewart Shapiro
for comments and discussion about earlier embryonic versions of
certain results in this paper.

\end{document}